\providecommand{\U}[1]{\protect\rule{.1in}{.1in}}
\newtheorem{theorem}{Theorem}
\newenvironment{proof}[1][Proof]{\noindent\textbf{#1.} }{\ \rule{0.5em}{0.5em}}
\begin{document}
\journal{arXiv} 

\newcommand{\shortcite}{\cite}
\newcommand{\citeA} {\cite}

\begin{frontmatter}
\title{On limitations of learning algorithms in competitive environments}
\author{A.Y. Klimenko and D.A. Klimenko \\
SoMME, The University of Queensland, QLD 4072, Australia}
\begin{abstract}

We discuss conceptual limitations of generic learning algorithms pursuing adversarial goals in competitive environments, 
and prove that they are subject to limitations 
that are analogous to the constraints on knowledge imposed by the famous
theorems of G\"{o}del and Turing. These limitations are shown to be related to intransitivity, which is 
commonly present in competitive environments. 

\end{abstract}

\begin{keyword}

{learning algorithms \sep competitions \sep incompleteness theorems}

\end{keyword}
\end{frontmatter}


\section{Introduction}

The idea that computers might be able to do some intellectual work has existed
for a long time and, as demonstrated by recent AI developments, is not merely
science fiction. AI has become successful in a range of applications,
including recognising images and driving cars. Playing human games such as
chess and Go has long been considered to be a major benchmark of human
capabilities. Computer programs have become robust chess players and, since
the late 1990s, have been able to beat even the best human chess champions;
though, for a long time, computers were unable to beat expert Go players ---
the game of Go has proven to be especially difficult for computers.

While most of the best early AI game-playing agents were specialised to play a
particular game (which is less interesting from the general AI perspective),
more recent game-playing agents often involve general machine learning
capabilities, and sometimes evolutionary algorithms
\cite{AI2011,ChessEvol2013}. Since 2005, general game playing was developed to
reflect the ability of AI to play generic games with arbitrary given rules
\shortcite{GGP2005}. Conceptually, the game model associated with the general
game playing resembles the schematisation of algorithms introduced by Turing
machines. In 2016, a new program called \textit{AlphaGo} finally won a victory
over a human Go champion, only to be beaten by its subsequent versions
(\textit{AlphaGo Zero} and\ \textit{AlphaZero).} \textit{AlphaZero} proceeded
to beat the best computers and humans in chess, shogi and Go, including all
its predecessors from the Alpha family
\shortcite{AlphaZeroScience2018}\footnote{While relative strength of these
programs has not been tested after \textit{Stockfish} upgrades
(\textit{AlphaZero} is not publicly available), the open-source chess engine
\textit{Stockfish}, which incorporates elements of machine learning since
version 12 of 2020, has the highest historical ELO ratings at present
\cite{CCRL2021}.}. Core to AlphaZero's success is its use of a deep neural
network, trained through reinforcement learning, as a powerful heuristic to
guide a tree search algorithm (specifically Monte Carlo Tree Search).

The recent successes of AI in playing games are good reason to consider the
limitations of learning algorithms and, in a broader sense, the limitations of
AI. In the context of a particular competition (or \textquotedblleft game"), a
natural question to ask is whether a comprehensive winner AI might exist ---
one that, given sufficient resources, will \emph{always} achieve the best
possible outcome. Following the typical modus operandi of reinforcement
learning, such as implemented in AlphaZero,\ we distinguish \textquotedblleft
learning algorithms" that require very large computational resources, and
\textquotedblleft playing programs" that have relatively modest computational
needs --- learning algorithms and playing programs perform different roles and
operate under different conditions. In this context, we focus on conceptual
limitations of general learning algorithms (rather than the actual game
dynamics), which are rooted in the availability of information about one's opponents.

\section{Examples of transitive and intransitive games}

The outcomes that can or cannot be achieved in a particular game depends on
the type of the game and the scope of information that is known about the
strategies used by the opponent. It appears that, for the purposes of our
analysis, numerous properties of various games can be reduced into two types
of games. In this section, we illustrate these two types by basic examples
shown in Figure \ref{fig1}.\ These examples are simple symmetric adversarial
games, where two opponents can use the same strategies against each other.

The \textquotedblleft dice game\textquotedblright\ shown in Figure
\ref{fig1}(a) is fully \textit{transitive} --- its strategies form a
transitive sequence $\boxed{1}\prec\boxed{2}\prec\boxed{3}\prec\boxed{4}\prec
\boxed{5}\prec\boxed{6}$, where \textquotedblleft$\prec$\textquotedblright%
\ indicates a winning strategy and $\boxed{a}\prec\boxed{b}\prec\boxed{c}$
always implies $\boxed{a}\prec\boxed{c}$ when the binary relation
\textquotedblleft$\prec$\textquotedblright\ is \textit{transitive}. It is
obvious that there exists a best possible strategy, $\boxed{6}$, which wins or
at least ensures a draw. The rock-paper-scissors game shown in Figure
\ref{fig1}(b) is strongly \textit{intransitive} since \textquotedblleft
rock\textquotedblright, \textquotedblleft paper\textquotedblright\ and
\textquotedblleft scissors\textquotedblright\ form an intransitive triplet
$\boxed{\operatorname{R}}\prec\boxed{\operatorname{P}}\prec
\boxed{\operatorname{S}}\prec\boxed{\operatorname{R}}$
\cite{Qcat2009M,Podd2013,K_Ent2015}. Obviously, there is no simple winning
strategy in this game. The outcome of this game, however, strongly depends on
whether a player is aware of the strategy used by their opponent: if player 1
knows the strategy of player 2, the first player can win easily.

\begin{figure}[h!]
\begin{center}
\includegraphics[width=15cm,page=1,trim=0cm 6cm 0cm 6cm, clip ]{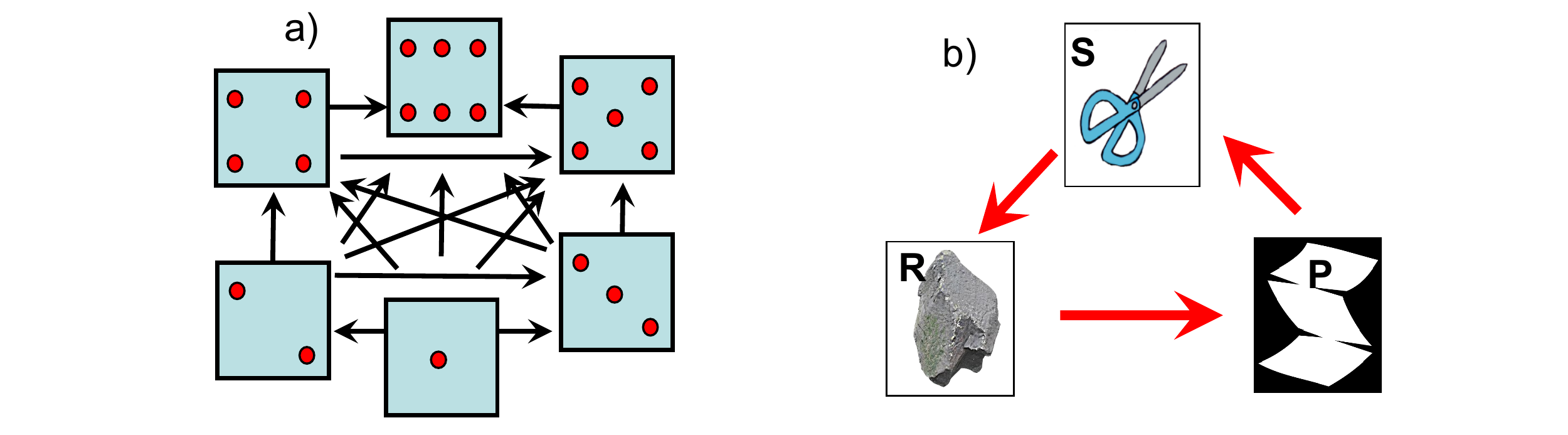}
\caption{ Examples of simple symmetric games: a) fully transitive: dice game, b) strongly intransitive: rock-paper-scissors game. The arrows point from losing to winning strategies.   }
\label{fig1}
\end{center}
\end{figure}

Although the dice game is not significantly affected by information about the
opponent (strategy $\boxed{6}$ is always the best in Figure \ref{fig1}(a)),
knowing more about the opponent might still be practically useful. Indeed, a
player needs to examine $6^{2}=36$ possible outcomes but, if the strategy of
the opponent is known, can reduce this number to $6$. Even if the exact
opposing strategy is not known, the experience gained by playing against
opponents is still useful: we may be able to focus our exploration on strong
strategies (e.g. $\boxed{4}$, $\boxed{5}$ and $\boxed{6}$ in the dice game).
If the game is transitive but complicated, this can reduce the time and effort
needed for strategy optimisation. If the game is intransitive, the advantage
given by information about the opponent becomes crucial for success. Presence
of intransitivity in competitive environments is a common source of systemic
and evolutionary complexity \cite{K-PT2013}, and this tends to increase the
importance of information about one's competitors. In general, more realistic
games can involve various mixtures of transitive and intransitive rules. In
this case, strategies that are a priori deemed to be transitively superior can
be unexpectedly beaten by seemingly weak but intransitively effective alternatives.

As noted above, games can be played by computer programs at the levels that
routinely exceed human abilities. Intransitivity is common in competitions
among programs. According to CCRL website \cite{CCRL2021}, the result matrix
of the 12 leading chess engines is intransitive. For example, the engines
ranked 1, 2 and 6 in 2021 (the 4040 List) form an intransitive triplet:
\boxed{\textit{1.Stockfish}} $\succ$ \boxed{\textit{ 2.Fat Fritz }} $\succ$
\boxed{\textit{ 6.Houdini
}} $\succ$ \boxed{\textit{ 1.Stockfish}}. While the chess rules allow, at
least in principle, for transitive dominance of the best algorithms,
predicting winners can be even more difficult when game rules are explicitly
intransitive. The idea that difficulties of predicting winners in intransitive
competitions is related to algorithmic insolvability of the halting problem
was first suggested by Poddiakov \cite{Podd2006,Podd2013}. The present work
proves that, in the context of learning algorithms, insolvability of the
halting problem is indeed related to intransitivity and this imposes
conceptual limitations on the extent of learning that can be mastered by
algorithms in competitive environments. 

Considering learning algorithms playing a game against each other, we note
that the character of the game and information available about the opposing
algorithm are major factors affecting the outcome of the game. These factors
are taken into account in more accurate definitions of competitions between
algorithms, which are introduced further in the paper.

\section{Learning algorithms and playing programs}

Let AI be represented by a \textit{learning algorithm} $L,$ which is run on
computers that are practically unconstrained in terms of resources and speed,
to produce $L\longrightarrow p$ a \textit{playing program} $p,$ which is then
run in real time with limited resources. Due to these limitations, one can
assume that there is a large but finite number of possible programs.\ The
learning algorithm is understood in a general way. For example, algorithm $L$
can simply print a certain program $p,$ or $L$ may perform extensive
calculations to optimise its output. At this stage we consider only algorithms
that produce output and halt.

Adversarial competition between $p_{i}^{\prime}$ and $p_{j}^{\prime\prime}$
(which are selected from a finite set of allowed programs $p_{1}^{\prime
},...,p_{n^{\prime}}^{\prime}$\ and $p_{1}^{\prime\prime},...,p_{n^{\prime
\prime}}^{\prime\prime}$) is evaluated for a particular game by computable
algorithm $C$.\ The simplest way to encode a program $p_{i}$ is by its number
$i$, although $p_{i}$ can also be a program in any computer language that can
be executed by $C$. The rules of the game are reflected by $C$, and deemed to
be known to the learning algorithms, i.e. $L^{\prime}=L^{\prime}%
[C]\longrightarrow p_{i}^{\prime}$ and $L^{\prime\prime}=L^{\prime\prime
}[C]\longrightarrow p_{j}^{\prime\prime}$. For our purposes, it is sufficient
to consider zero-sum win/draw/loss two-player games, i.e. in which $C$ has the
form
\begin{equation}
C(p_{i}^{\prime},p_{j}^{\prime\prime})=\left\{
\begin{array}
[c]{ll}%
+1, & p_{i}^{\prime}\succ p_{j}^{\prime\prime},\ \ \text{(win for player 1)}\\
0, & p_{i}^{\prime}\sim p_{j}^{\prime\prime},\ \ \text{(draw)}\\
-1, & p_{i}^{\prime}\prec p_{j}^{\prime\prime},\ \ \text{(loss for player 1)}%
\end{array}
\right.  \label{Tur1}%
\end{equation}
As long as the resource allocations are sufficient, any finite game of this
category can be represented by such an algorithmic function $C(.,.)$; thus we
can consider $C(.,.)$ to itself be a definition of a game. The specific form
of the game rules --- with perfect or imperfect information, with simultaneous
or sequential moves, etc. --- does not affect our interpretation. As
schematically illustrated in Figure \ref{fig1}\ and accurately defined below,
we only need to distinguish two types of games (although these types are, of
course, affected by the game rules: for example, imperfect information tends
to stimulate intransitivity --- see ref.\citeA{K_Ent2015}).\ The algorithm $C$
allows only limited time (or a limited number of steps) for execution of the
programs. For example, if a program fails to make a move within given time
limits, it might immediately be judged as the loser; the time limits, however,
do not apply to the learning stage. Hence, each program is allowed to execute
only a finite number of steps (determined by the time limit) and the game has
only a finite number of states; resource limitations imply that the size of
the programs must be finite (and, perhaps, limited by the game rules);
therefore, the numbers of possible programs $n^{\prime}$ and $n^{\prime\prime
}$ are large but finite.

Irrespective of the specific nature of the game, the competition can be fully
specified by a finite $n^{\prime}\times n^{\prime\prime}$ table $C_{ij}%
=C(p_{i}^{\prime},p_{j}^{\prime\prime}),$ which is computable for the learning
algorithms. It is easy to see that each program $p_{i}^{\prime}$ can be
interpreted as strategy number $i$ out of the set of $n^{\prime}$ possible
game strategies, while $p_{j}^{\prime\prime}$ is just the strategy number $j$
out of the set of $n^{\prime\prime}$ strategies. Indeed the table $C_{ij}$ is
simply a \textit{normal-form} representation of the game.\ A\textit{\ Turing
machine} can easily compute $C(p_{i}^{\prime},p_{j}^{\prime\prime}),$ for
example, by selecting $C_{ij}$ from the game table. In the special case of a
symmetric game, the two sets of strategies are the same $n^{\prime}%
=n^{\prime\prime}=n$, $p_{i}^{\prime}=p_{i}^{\prime\prime}=p_{i},$ while the
payoff matrix must be antisymmetric: $C_{ij}=-C_{ji}.$ A symmetric game,
obviously, requires that $C_{ii}=C(p_{i},p_{i})=0$, but in asymmetric games
$C(p_{i},p_{i})\neq0$ is possible (assuming that strategy $p_{i}$ is available
to the first and to the second player). For example, the outcome
$C(p_{i},p_{i})=1$ might be common in a game with a first-mover advantage
(though the order of players in $C(\cdot,\cdot)$ is down to convention).

Since each program $p_{i}$ can be interpreted as a game strategy, the outcomes
of competition between these pure strategies can be defined in terms of the
classical game theory for zero-sum games with pure strategies. This game may
or may not have a Nash equilibrium in pure strategies. For our purposes, it is
sufficient to consider two alternatives: A) a \textit{game with transitive
domination}, which must have a pure strategy Nash equilibrium and B) a
\textit{strongly intransitive game}, which cannot. In the case with
\textit{transitive domination}, there is a strategy $p_{d}^{\prime}$ that is
transitively superior over all opposing strategies: $p_{d}^{\prime}\succ
p_{j}^{\prime\prime}$ for all $j$ (without loss of generality, we assume that
player 1 has this strategy). Note that in a symmetric game, strict transitive
domination $p_{i}^{\prime}\succ p_{j}^{\prime\prime}$ is not possible since
$p_{i}^{\prime}=p_{d}\sim p_{d}=p_{j}^{\prime\prime}.$ The game is
\textit{strongly intransitive} if each strategy $p_{i}^{\prime}$ has at least
one strategy $p_{j}^{\prime\prime}=S^{\prime\prime}(p_{i}^{\prime})$
(dependent on $p_{i}^{\prime}$) such that $p_{i}^{\prime}\prec p_{j}%
^{\prime\prime}$, and each strategy $p_{j}^{\prime\prime}$ has at least one
strategy $p_{\tilde{\imath}}^{\prime}=S^{\prime}(p_{j}^{\prime\prime})$
(dependent on $p_{j}^{\prime\prime})$ such that $p_{\tilde{\imath}}^{\prime
}\succ p_{j}^{\prime\prime}$. The functions $S^{\prime}(...)$ and
$S^{\prime\prime}(...)$ are \textquotedblleft best response\textquotedblright%
\ functions, which pick out (potentially out of several winning options) a
player's best answer to a fixed strategy from their opponent. In general
$S^{\prime}(...)$ and $S^{\prime\prime}(...)$ are different functions, but one
can choose $S^{\prime}(...)=S^{\prime\prime}(...)$ \ in a symmetric game.

It is obvious that a learning algorithm can guarantee victory as player 1 (or
at least a draw if the game is symmetric) in the transitive case (A), as long
as it finds the transitively superior strategy $L^{\prime}\rightarrow
p_{d}^{\prime}$, but any learning algorithm can be defeated in the
intransitive case (B): $L^{\prime}\rightarrow p_{i}^{\prime}$ is defeated by
$L^{\prime\prime}\rightarrow S^{\prime\prime}(p_{i}^{\prime}),$ which is in
turn defeated by $L^{\prime}\rightarrow S^{\prime}(S^{\prime\prime}%
(p_{i}^{\prime})),$ which is defeated by $L^{\prime\prime}\rightarrow
S^{\prime\prime}(S^{\prime}(S^{\prime\prime}(p_{i}^{\prime}))),$ etc. Of
course, if one were to allow random mixing of strategies, an algorithm
supplemented by a (quantum) random generator might be able to eke out some
statistical edge by playing a mixed-strategy Nash equilibrium, but this is far
from winning every game that in principle can be won.

One can also consider the case of playing multiple games, when a program
$p_{i}^{\prime}$ \ ($i=1,...,n^{\prime}$) competes against $p_{j}%
^{\prime\prime}$ \ ($j=1,...,n^{\prime\prime}$) in the games $C_{ij}^{_{(k)}}$
from a given set $C_{ij}^{_{(1)}},...,C_{ij}^{_{(m)}}$ of $m$ games with
$n^{\prime}\times n^{\prime\prime}$ possible payoffs. The outcome of the
series (win, draw or loss) is given by an $n^{\prime}\times n^{\prime\prime}$
overall payoff matrix $C_{ij}^{_{G}}$ that depends on the outcomes of each
game $C_{ij}^{_{(1)}},...,C_{ij}^{_{(m)}}$ as defined by the conditions of the
series. Note that the number $m_{n}$ of different games (\ref{Tur1}) with
$n^{\prime}\times n^{\prime\prime}$ payoffs is finite and cannot exceed
$m_{n}\leq3^{n^{\prime}\times n^{\prime\prime}},$ therefore $C_{ij}^{_{G}}$
must be one of these $m_{n}$ games. Although playing multiple games can make
the structure of the effective payoff matrix $C_{ij}^{_{G}}$ more complicated
and therefore increase practical difficulties experienced by the competitors,
this does not impose any new principal constraints on our analysis.

\section{Learning algorithms in open-source competitions}

There is, however, a more interesting case, in which the learning algorithms
can receive information about the strategies played by their opponents. The
case when the first algorithm $L^{\prime}$ knows the opposing program
$p_{j}^{\prime\prime}$ (that is, $L^{\prime}[p_{j}^{\prime\prime}]\rightarrow
p_{i}^{\prime}(p_{j}^{\prime\prime})$ but $L^{\prime\prime}\rightarrow
p_{j}^{\prime\prime}$ does not depend on $p_{i}^{\prime})$ gives a very
significant advantage to $L^{\prime}$ that, obviously, can be translated into
optimal strategies $p_{i}^{\prime}(p_{j}^{\prime\prime})$, even for
intransitive games. We are interested in the more complicated case in which
the learning algorithms are placed in symmetric or comparable conditions.
Under these conditions, the two opposing learning algorithms $L^{\prime}$ and
$L^{\prime\prime}$ receive each other's source code and then train two
opposing programs $p_{i}^{\prime}$ and $p_{j}^{\prime\prime}$ that compete in
real time. This implies that $L^{\prime}[L^{\prime\prime}]\rightarrow
p_{i}^{\prime}(L^{\prime\prime})$; that is, $L^{\prime}$ has $L^{\prime\prime
}$\ as an input to produce $p_{i}^{\prime}$ depending on $L^{\prime\prime},$
and $L^{\prime\prime}[L^{\prime}]\rightarrow p_{j}^{\prime\prime}(L^{\prime}%
)$, i.e. $L^{\prime\prime}$ has $L^{\prime}$\ as an input to produce
$p_{j}^{\prime\prime}$ depending on $L^{\prime}$. This information exchange is
fully symmetric, although the game, which is determined by $C(p_{i}^{\prime
},p_{j}^{\prime\prime})$, may be symmetric or asymmetric.\ Note that it is
generally unknown whether a particular algorithm can halt and produce an
output for a particular input. These learning algorithms that compete in
\textit{open-source competitions} can be formally specified using a
\textit{universal Turing machine}, $U\left\langle M\right\rangle
[D_{0}]\rightarrow D_{1}.$ Here, the universal Turing machine $U$ has header
$M$, is applied to input $D_{0}$ and produces output $D_{1}$. The learning
algorithms associated with open-source competitions are given by
$U\left\langle L^{\prime}\right\rangle [C,L^{\prime\prime}]\rightarrow
p_{i}^{\prime}$ and $U\left\langle L^{\prime\prime}\right\rangle [C,L^{\prime
}]\rightarrow p_{j}^{\prime\prime},$ where the computable algorithmic function
$C=C(p_{i}^{\prime},p_{j}^{\prime\prime})$ (or the corresponding game payoff
table $C_{ij}$) is emphasised to be available to both competing Turing
machines. The two algorithms implemented by the universal Turing machine $U$
represent two adversarial goals and cannot be reduced to a single algorithm.
Therefore it is possible to consider learning algorithms that are expected to
play any game with number of payoffs not exceeding $n^{\prime}\times
n^{\prime\prime}$.

An algorithm $L^{\prime}$ can win over algorithm $L^{\prime\prime}$ by either
producing a winning program $L^{\prime}[L^{\prime\prime}]\rightarrow
p_{i}^{\prime}(L^{\prime\prime})\succ p_{j}^{\prime\prime}(L^{\prime})$, where
$L^{\prime\prime}[L^{\prime}]\rightarrow p_{j}^{\prime\prime}(L^{\prime})$, or
by producing an output $L^{\prime}[L^{\prime\prime}]\rightarrow p_{i}^{\prime
}(L^{\prime\prime})$ and demonstrating that the competing algorithm does not
halt $L^{\prime\prime}[L^{\prime}]\rightarrow\varnothing^{\prime\prime}$ (at
this point our analysis assumes existence of an agreed axiomatic system that
determines correctness of demonstrations). Therefore, at least some of the
algorithms that fail to halt (e.g. a computer program with an infinite loop)
can be defeated. Note that the outcome of the competition may remain undecided
(e.g. when both algorithms run indefinitely).

Can a learning algorithm, say $L^{\prime},$ defeat all opposing algorithms
when information about the algorithms is exchanged? In the case of an
asymmetric game with \textit{transitive domination }(A), any opponent that
halts and produces an output $p_{j}^{\prime\prime}$ is defeated by:
$L^{\prime}[L^{\prime\prime}]\rightarrow p_{d}^{\prime}\succ p_{j}%
^{\prime\prime}$ --- the winning algorithm $L^{\prime}$ simply ignores its
input $L^{\prime\prime}$ and selects $p_{i}^{\prime}=p_{d}^{\prime}$. The case
of \textit{strongly intransitive} competition (B) is, by contrast, quite a bit
more complicated. In this case, any strategy $p_{i}^{\prime}$ that $L^{\prime
}$ produces could (at least in principle) be beaten by an opponent that
outputs $S^{\prime\prime}(p_{i}^{\prime})$; thus an analogous algorithm that
ignores its input seems especially foolhardy.

More specifically, we consider an algorithm $L^{\prime}$ to be a
\textit{universal winner} for a given game $C(...)$ when it can defeat every
opposing algorithm $L^{\prime\prime}$. At this point we can formulate the
following theorem

\begin{theorem}
\label{Lem1} Any algorithm competing in an open-source competition associated
any strongly intransitive game cannot be a universal winner.
\end{theorem}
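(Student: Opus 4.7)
The plan is a diagonalization argument in the spirit of Turing's undecidability proof, combined with Kleene's second recursion theorem. Suppose, for contradiction, that some $L^{\prime}$ is a universal winner. I will construct an opponent $L^{\prime\prime}$ that $L^{\prime}$ cannot defeat, by arranging for $L^{\prime\prime}$ to simulate $L^{\prime}$'s play against itself and then respond intransitively.

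First, I observe that a best-response function $S^{\prime\prime}$ is computable in this setting. Because the game is strongly intransitive, for every $p_{i}^{\prime}$ there is at least one $p_{j}^{\prime\prime}$ with $C(p_{i}^{\prime},p_{j}^{\prime\prime})=-1$, and since the payoff table $C_{ij}$ is finite and computable, one obtains $S^{\prime\prime}(p_{i}^{\prime})$ by a brute-force search over player 2's finite strategy set. Next, I define an auxiliary two-argument procedure $\tilde{L}(x,y)$ that simulates $U\left\langle x\right\rangle[C,y]$; if that simulation halts with output $p_{i}^{\prime}$, then $\tilde{L}(x,y)$ returns $S^{\prime\prime}(p_{i}^{\prime})$, and otherwise $\tilde{L}(x,y)$ does not halt. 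Applying the recursion theorem, I obtain a learning algorithm $L^{\prime\prime}$ such that on input $L^{\prime}$ it behaves exactly as $\tilde{L}(L^{\prime},L^{\prime\prime})$ --- i.e.\ $L^{\prime\prime}$ has effective access to its own source code and uses it to simulate the adversary's play against itself.

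The argument then splits into two cases. If $U\left\langle L^{\prime}\right\rangle[C,L^{\prime\prime}]$ halts with some output $p_{i}^{\prime}$, then by construction $U\left\langle L^{\prime\prime}\right\rangle[C,L^{\prime}]$ halts with output $S^{\prime\prime}(p_{i}^{\prime})\succ p_{i}^{\prime}$, so $L^{\prime\prime}$'s playing program beats $L^{\prime}$'s playing program. If $U\left\langle L^{\prime}\right\rangle[C,L^{\prime\prime}]$ does not halt, then $L^{\prime}$ produces no output at all, and hence can satisfy neither the ``produce a winning program'' clause nor the ``produce an output and demonstrate the opponent fails to halt'' clause of the winning condition. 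In either case, $L^{\prime}$ fails to defeat $L^{\prime\prime}$, contradicting the assumption that $L^{\prime}$ is a universal winner.

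The main obstacle I expect is the careful invocation of the recursion theorem within the open-source competition framework: one must verify that the quined $L^{\prime\prime}$ is a legitimate learning algorithm taking $L^{\prime}$ as its input in the form prescribed by $U\left\langle L^{\prime\prime}\right\rangle[C,L^{\prime}]$, rather than a metatheoretic object. A secondary subtlety is the non-halting clause: one must rule out that $L^{\prime}$ defeats $L^{\prime\prime}$ by correctly proving non-halting when $L^{\prime\prime}$ really does diverge. This is handled by the case split above, since the only scenario in which the constructed $L^{\prime\prime}$ fails to halt is precisely the scenario in which $L^{\prime}$ itself fails to halt, and then $L^{\prime}$ cannot produce the required accompanying output.
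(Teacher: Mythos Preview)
Your proof is correct and follows essentially the same diagonalization idea as the paper: construct an opponent $L^{\prime\prime}$ that simulates $L^{\prime}[L^{\prime\prime}]$ and outputs $S^{\prime\prime}$ of the result. The paper's version is more informal---it simply asserts the self-referential $L^{\prime\prime}$ exists and notes up front that a universal winner must halt on every input (collapsing your case split), whereas you make the self-reference rigorous via Kleene's recursion theorem and explicitly verify that $S^{\prime\prime}$ is computable, both of which the paper leaves implicit.
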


\begin{proof}
Indeed, let algorithm $L^{\prime}=L_{w}$ be such a universal winning algorithm
that can defeat any opposing algorithm $L^{\prime\prime}$. The winning
algorithm $L_{w}[L^{\prime\prime}]$ must always halt for any $L^{\prime\prime
},$ and must produce an output $L_{w}[L^{\prime\prime}]\rightarrow
p_{i}^{\prime}(L^{\prime\prime})$ dependent on $L^{\prime\prime}$. Consider
the following $L^{\prime\prime}$: it runs $L_{w}[L^{\prime\prime}] $ to obtain
its output, $p_{i}^{\prime}=p_{i}^{\prime}(L^{\prime\prime}),$ and then
selects $p_{j}^{\prime\prime}=S^{\prime\prime}(p_{i}^{\prime})$, which defeats
$L_{w}$. Hence $L_{w}$ is defeated by at least one algorithm $L^{\prime\prime
}$ and cannot be a universal winner.
\end{proof}

\section{Relation to the halting problem}

Assuming that sufficient information is available to the opposition, even the
best learning algorithms cannot be universal winners for all winnable games,
since at least some of these games must be strongly intransitive and are
subject to theorem \ref{Lem1} (note that strongly intransitive games are, by
definition, winnable). One of the most interesting features of this statement
is that it is closely related to the famous incompleteness theorems of
G\"odel\citeA{Godel1931} and Turing\citeA{Turing1937}. The results of Turing
were later generalised in the Rice theorem\citeA{Rice1953}. The incompleteness
theorems have implications for the existence of a halting function
\begin{equation}
H(L,D)=H(L[D])=\left\{
\begin{array}
[c]{l}%
1,\ \ \text{when}\ L[D]\text{ halts }\\
0,\ \ \text{when }L[D]\text{ does not halt }%
\end{array}
\right.  \label{Tur0}%
\end{equation}
that uses the agreed axiomatic system to determine whether algorithm (program)
$L$ halts or runs forever when applied to input data $D$. We can show that the
impossibility of an algorithm implementing the universal halting function
follows from our considerations.

\begin{theorem}
\label{Lem2}(Turing) A universal computable halting function $H(L,D)=H(L[D])$
does not exist --- it cannot be computable for all algorithms $L$ and all data
sets $D$.
\end{theorem}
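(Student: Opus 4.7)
The plan is to prove Theorem~\ref{Lem2} by contradiction, reducing it to Theorem~\ref{Lem1}. I assume, for contradiction, that there is a total computable halting function $H(L,D)=H(L[D])$, and I show that this implies the existence of a universal winner in some strongly intransitive open-source competition, which Theorem~\ref{Lem1} rules out.

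First I fix any strongly intransitive game $C$ whose strategy set and payoff table are finite; the prototypical rock--paper--scissors game of Figure~\ref{fig1}(b) suffices. Because the table is finite, the best-response function $S^{\prime}$ is trivially computable by table look-up. Next I define the candidate universal winner $L_{w}$ using the standard self-reference device (Kleene's second recursion theorem, or equivalently a quine on the universal Turing machine $U$), so that $L_{w}$ has access to its own source code during execution. On input $L^{\prime\prime}$, the algorithm $L_{w}$ first evaluates $H(L^{\prime\prime},L_{w})$; if the answer is $1$ it simulates $U\langle L^{\prime\prime}\rangle[C,L_{w}]$ to completion, obtaining an output $p_{j}^{\prime\prime}$, and then emits $S^{\prime}(p_{j}^{\prime\prime})$, which by strong intransitivity beats $p_{j}^{\prime\prime}$. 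If the answer is $0$, $L_{w}$ emits any fixed program together with the $H$-derived certificate that $L^{\prime\prime}[L_{w}]$ never halts; by the winning convention spelled out just before Theorem~\ref{Lem1}, this also counts as a win for $L_{w}$.

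The construction covers every possible $L^{\prime\prime}$: halting opponents are out-played by the best-response branch, non-halting opponents are defeated by the certificate branch. Hence $L_{w}$ is a universal winner in the strongly intransitive game $C$, directly contradicting Theorem~\ref{Lem1}. The contradiction forces the conclusion that no total computable $H$ can exist.

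The main obstacle I expect is the legitimate self-reference inside $L_{w}$: the algorithm must consult $H$ on $(L^{\prime\prime},L_{w})$ while it is itself running. This is exactly the situation handled by Kleene's recursion theorem, so once the quine construction is invoked the argument is mechanical. A secondary subtlety is the clause ``demonstrates that $L^{\prime\prime}$ does not halt'': the axiomatic system governing the open-source competition must be strong enough for the value $H(L^{\prime\prime},L_{w})=0$ to serve as an accepted proof of non-halting. This is compatible with the paper's standing assumption (introduced just before Theorem~\ref{Lem1}) that such an axiomatic system is in place, so no additional machinery is needed.
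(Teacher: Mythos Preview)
Your proposal is correct and follows essentially the same route as the paper: assume a total computable $H$, build from it a universal winner $L_{w}$ in a fixed strongly intransitive game by branching on $H(L^{\prime\prime},L_{w})$ --- best-responding with $S^{\prime}$ when the opponent halts, and producing an output plus a non-halting certificate when it does not --- and then invoke Theorem~\ref{Lem1} for the contradiction. The only notable difference is that you make the self-reference explicit via Kleene's recursion theorem, whereas the paper simply writes $L_{w}$ into its own definition in (\ref{Tur2}) without comment; your version is the more careful formulation of the same idea.
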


\begin{proof}
If the halting function $H(...)$ is universally computable, one can easily
construct a universal winning algorithm $L^{\prime}=L_{w}$:
\begin{equation}
L_{w}[L^{\prime\prime}]:\left\{
\begin{array}
[c]{l}%
H(L^{\prime\prime},L_{w})=0,\ \ \rightarrow p_{1}^{\prime}\\
H(L^{\prime\prime},L_{w})=1,\ \ \rightarrow S^{\prime}(p_{j}^{\prime\prime
}(L_{w}))
\end{array}
\right.  \label{Tur2}%
\end{equation}
for a selected strongly intransitive game $C(...)$. That is, if $L_{w}%
[L^{\prime\prime}]$ determines that $L^{\prime\prime}[L_{w}]$ runs forever,
then $L_{w}$ can print step-by-step execution of $H(...)$\ to demonstrate
$H(L^{\prime\prime},L_{w})$ $=$ $0,$ produce any output, say $L_{w}%
[L^{\prime\prime}]\rightarrow p_{1}^{\prime},$ halt and declare its victory.
If $L_{w}[L^{\prime\prime}]$ determines that $L^{\prime\prime}[L_{w}]$ halts
and $L^{\prime\prime}[L_{w}]\rightarrow p_{j}^{\prime\prime}(L_{w})$, then
$S^{\prime}(p_{j}^{\prime\prime}(L_{w}))$ is selected. Since a universal
winning algorithm cannot exist according to theorem \ref{Lem1}, this also
prohibits the existence of a universal halting function and proves theorem
\ref{Lem2}.
\end{proof}

The halting problem is also related to the incompleteness theorem of
G\"{o}del, which, in the context of our consideration, points to the existence
of correct but unprovable statements (e.g. the statement \textquotedblleft%
$L[D]$ runs forever\textquotedblright). It appears that incompleteness of
formal systems is promoted by intransitivity, at least under conditions of
competitive environments considered in theorems \ref{Lem1} and \ref{Lem2}: in
simple terms, intransitivity makes our knowledge incomplete and relativistic.

\section{Failure to halt as a game strategy}

In the context of the competitions considered here, the incompleteness
theorems allow us to broaden the statement of theorem \ref{Lem1}:

\begin{theorem}
\label{Lem3} Any algorithm competing in an open-source competition cannot be a
universal winner.
\end{theorem}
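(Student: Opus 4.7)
The plan is to extend the diagonalization of Theorem \ref{Lem1} from strongly intransitive games to arbitrary ones. Theorem \ref{Lem1} already dispatches the strongly intransitive case, so the new work lies in blocking a would-be universal winner $L_w$ in a game with transitive domination, where the opponent cannot beat $L_w$'s output $p_d^{\prime}$ directly and must instead evade it by refusing to halt. Since $L_w$'s only available route to victory against a non-halting opponent is to output, in the agreed axiomatic system, a proof that the opponent does not halt, the opponent should be engineered so that any such proof would be self-falsifying.

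Concretely, suppose for contradiction that $L_w$ is a universal winner for some game $C$. I would use the recursion theorem (exactly as in the proof of Theorem \ref{Lem1}, where $L^{\prime\prime}$ is allowed to invoke its own code) to construct an algorithm $L^{\prime\prime}$ whose behaviour on input $L$ is: simulate $L[L^{\prime\prime}]$ to completion; if its output contains a proof, valid in the agreed axiomatic system, that $L^{\prime\prime}[L]$ does not halt, then halt with some fixed playing program $p_{1}^{\prime\prime}$; otherwise loop. Setting $L=L_{w}$, soundness of the axiomatic system forbids $L_{w}[L^{\prime\prime}]$ from emitting such a proof, since doing so would force $L^{\prime\prime}[L_{w}]$ to halt and thereby falsify the very proof it claims to justify. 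Hence $L^{\prime\prime}[L_{w}]$ loops forever and $L_{w}$ produces no non-halting proof either, so neither of the two winning conditions in an open-source competition is met and $L_{w}$ fails to defeat $L^{\prime\prime}$, contradicting universality.

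An alternative route, which I would mention as a corollary, is a direct reduction to Theorem \ref{Lem2}: given an arbitrary pair $(A,D)$, let $L^{\prime\prime}_{A,D}$ ignore its input, simulate $A[D]$, and halt with $p_{1}^{\prime\prime}$ iff $A[D]$ halts; then inspect $L_{w}[L^{\prime\prime}_{A,D}]$'s output to decide which of the two winning modes occurred, thereby deciding whether $A[D]$ halts and contradicting Theorem \ref{Lem2}. The main obstacle in either form of the argument is the interplay between the self-referential definition of the opponent and the requirement that ``$L_{w}$'s output contains a valid non-halting proof'' be a \emph{decidable} predicate on that output; both are handled by the standard recursion-theorem construction and by the stipulation that the agreed axiomatic system admits a computable proof-checker, after which the soundness step closes the diagonal and delivers the contradiction for every game $C$.
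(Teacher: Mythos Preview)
Your proposal is correct, and both routes you sketch are sound. The paper's own proof is closest to your \emph{alternative} route: it invokes Theorem~\ref{Lem2} to assert the existence of some $L_h[D_h]$ whose halting status is ``unknown'', sets $L^{\prime\prime}$ to ignore its input and execute $L_h[D_h]$, and observes that $L_w$ cannot defeat this opponent. Your formulation of the alternative is in fact tighter than the paper's, since you phrase it as an explicit reduction (use $L_w$ to decide halting for \emph{every} $(A,D)$), whereas the paper tacitly slides from ``no computable halting function'' to ``some instance has unprovable halting status'', which strictly speaking needs an incompleteness-style argument rather than Theorem~\ref{Lem2} alone.

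Your \emph{main} approach --- a direct diagonalization via the recursion theorem, where $L^{\prime\prime}$ halts precisely when $L_w$ certifies that it does not --- is genuinely different from the paper's. It is self-contained (it does not pass through Theorem~\ref{Lem2}) and makes the role of soundness of the agreed axiomatic system fully explicit. The paper's approach is shorter because it offloads the work to the earlier theorems, but at the cost of the small gap noted above; your diagonal argument closes that gap and also makes transparent why the case split between strongly intransitive games and games with transitive domination is actually unnecessary --- your construction of $L^{\prime\prime}$ defeats $L_w$ regardless of the game type.
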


\begin{proof}
Let us assume that algorithm $L^{\prime}=L_{w}$ is such a universal winning
algorithm. Due to theorem \ref{Lem2}, this algorithm cannot implement a
universal halting function and there must exist an algorithm $L_{h}$, whose
halting status remains unknown when applied to some data $D_{h}$ (note that
$L_{h}$ and $D_{h}$ may or may not have some direct relevance to the current
game). Hence, the algorithm $L^{\prime\prime}[L_{w}]$ that ignores $L_{w}$ and
executes $L_{h}[D_{h}]$ is not defeated by $L_{w}[L^{\prime\prime}]$. This
leads us to a contradiction.
\end{proof}

Note that the universal winner $L_{w}$ is shown to be defeated in the
intransitive conditions of theorem \ref{Lem1} by a known, specific strategy of
the opposition, $L^{\prime\prime}[L_{w}]\rightarrow p_{j}^{\prime\prime
}=S^{\prime\prime}(p_{i}^{\prime})$; while $L_{w}$ only fails to defeat
$L^{\prime\prime}$\ in theorem \ref{Lem3}. In the latter case, $L^{\prime
\prime}$ executes the algorithm $L_{h}[D_{h}],$ which is only known to exist
but not explicitly specified or nominated. Generally, both $L^{\prime\prime}$
and $L^{\prime}$ might not know whether a specific $L_{h}[D_{h}]$ halts or not
since, if the fact that $L_{h}[D_{h}]$ does not halt\ is known to
$L^{\prime\prime}$ using $L_{h},$ then $L^{\prime}$ might be able to
demonstrate this and win. Hence, competitors may have to use algorithms
without complete knowledge of their performance. The abstract strategies used
to prove theorem \ref{Lem3} may have more relevance to the real world than one
might think. For example, the weaker side $L^{\prime\prime}$ may realise that
it would lose to $p_{i}^{\prime}=p_{d}^{\prime}$ with any strategy
$p_{j}^{\prime\prime}$ it can muster, and simply refuse to halt $L^{\prime
\prime}[L^{\prime}]\rightarrow\varnothing^{\prime\prime},$ while hiding its
intentions from the opposition so that $L^{\prime}$ cannot prove that
$H(L^{\prime\prime},...)=0$. Does this, perhaps, resemble the situation in the
last US presidential elections?

One can see that the logic of the adversarial game converts a failure to solve
the problem and halt into a new game strategy, $\varnothing^{\prime\prime},$
assuming that $H(L^{\prime\prime},...)$ is not computable. Even if
$p_{b}^{\prime}$ is transitively dominant, the strategy $\varnothing
^{\prime\prime}$ introduces some intransitivity into the game since
$p_{b}^{\prime}\succ p_{j}^{\prime\prime}\succ p_{i}^{\prime}$ (with some
suitable selection of $j$ and $i$), but $p_{b}^{\prime}\sim\varnothing
^{\prime\prime}\sim p_{i}^{\prime}$, although this intransitivity is weaker
than that in strongly intransitive games.\ Intransitivity imposes limitations
on universal winning strategies, which, as determined by theorems \ref{Lem1}
and \ref{Lem2}, are associated with the impossibility of universal computable
halting functions.

\section{Discussion and conclusion}

Considering the capacity of learning algorithms to learn and adapt to succeed
in competitions, we note the importance of information about one's
competitors. One-sided availability of such information makes competition
highly uneven. It is not a surprise that, given both substantial computational
resources and full information about a particular opponent, a good learning
algorithm should be able to defeat this opponent (assuming, of course, that
this is permitted by the game rules). We demonstrate, however, that if
information is exchanged pari passu with the opposition, one's ability to
manage the outcomes of adversarial games is necessarily limited by algorithmic
incompleteness, and no learning algorithm can become a universal winner in
complex competitions, especially when the relevant competition rules are
intransitive. In the context of adversarial games, we do not presume
algorithmic incompleteness by invoking the incompleteness theorems, but
demonstrate incompleteness in game conditions by using the potential
intransitivity of competitive environments. Our consideration involves a
standard, Turing-like interpretation of computer algorithms but in conditions
when the algorithms are subordinated to the presence of conflicting goals
pursued by the competitors.

A basic learning algorithm to evolve a chess engine towards better performance
can be created without major difficulties, but achieving successful learning
under intransitive arrangements appears to be much more problematic
\citeA{ChessEvol2013}. A major question related to AI is whether AI can
perform well in more complex and uncertain situations, especially when
significant intransitivity is present in a competitive environment. Our
analysis illustrates that mere existence of intransitivity under these
conditions is sufficient for demonstrating incompleteness of our knowledge. We
show that no algorithm can become undefeatable and exercise full control over
an intransitive competitive environment by subjugating all other agents to its
own goals, unless this algorithm is advantageously benefited by asymmetric
availability of information or resources. Good learning algorithms should be
capable of converting knowledge about the opposing algorithms into a
significant advantage. The impossibility of controlling competitive
environments in intransitive conditions points to emergence of complexity,
although the question of whether AI systems can reach the higher levels of
complexity associated with known complex evolutionary systems (e.g. that of
biological, social and technological systems, and of human intelligence and
organisation) remains open.



\vskip0.2in
\bibliographystyle{elsarticle-num-names}
\bibliography{comp}

\end{document}